\newcommand{\twopiece}[5][0]{
    \ifcase#1
        \left\{\begin{array}{ll}{#2}&{\text{ if } #3}\\{#4}&{\text{ if } #5}\end{array}\right.
    \else
        \left\{\begin{array}{ll}{#2}&{\text{ if } #3}\vspace{#1pt}\\{#4}&{\text{ if } #5}\end{array}\right.
\fi}
\newcommand{\threepiece}[7][0]{ 
    \ifcase#1
        \left\{\begin{array}{ll}{#2}&{\text{ if } #3}\\{#4}&{\text{ if } #5}\\{#6}&{\text{ if } #7}\end{array}\right.
    \else
        \left\{\begin{array}{ll}{#2}&{\text{ if } #3}\vspace{#1pt}\\{#4}&{\text{ if } #5}\vspace{#1pt}\\{#6}&{\text{ if } #7}\end{array}\right.
\fi}
\def\R{\mathds{R}}
\newcommand{\eps}{\varepsilon}
\newcommand{\sign}{\operatorname{sign}}
\begin{document}

\mainmatter              
\title{Data-aware customization of activation functions reduces neural network error}
\titlerunning{Data-aware customization of activation}  
%
\author{Fuchang Gao\inst{1} \and Boyu Zhang\inst{2}}
\authorrunning{Fuchang Gao and Boyu Zhang} 
%
\tocauthor{Fuchang Gao, Boyu Zhang}
\institute{Department Mathematics and Statistical Science, University of Idaho\\ 875 Perimeter Drive, MS 1103, Moscow, ID 83844-1103, USA\\
\email{fuchang@uidaho.edu},\\ WWW home page:
\texttt{https://www.webpages.uidaho.edu/~fuchang/}
\and Institute for Modeling Collaboration and Innovation, University of Idaho\\
        875 Perimeter Dr MS 1122, Moscow, ID 83844-1122, USA
}

\maketitle              

\begin{abstract}
Activation functions play critical roles in neural networks, yet current off-the-shelf neural networks pay little attention to the specific choice of activation functions used. Here we show that data-aware customization of activation functions can result in striking reductions in neural network error. We first give a simple linear algebraic explanation of the role of activation functions in neural networks; then, through connection with the Diaconis-Shahshahani Approximation Theorem, we propose a set of criteria for good activation functions. As a case study, we consider regression tasks with a partially exchangeable target function, \emph{i.e.} $f(u,v,w)=f(v,u,w)$ for $u,v\in \mathbb{R}^d$ and $w\in \mathbb{R}^k$, and prove that for such a target function, using an even activation function in at least one of the layers guarantees that the prediction preserves partial exchangeability for best performance. Since even activation functions are seldom used in practice, we designed the ``seagull'' even activation function $\log(1+x^2)$ according to our criteria. Empirical testing on over two dozen 9-25 dimensional examples with different local smoothness, curvature, and degree of exchangeability revealed that a simple substitution with the ``seagull'' activation function in an already-refined neural network can lead to an order-of-magnitude reduction in error. This improvement was most pronounced when the activation function substitution was applied to the layer in which the exchangeable variables are connected for the first time. While the improvement is greatest for low-dimensional data, experiments on the CIFAR10 image classification dataset showed that use of ``seagull'' can reduce error even for high-dimensional cases. These results collectively highlight the potential of customizing activation functions as a general approach to improve neural network performance.

\keywords{Activation Function, Neural Network, Partial Exchangeable, Customization}
\end{abstract}

\section{Introduction}
The last decade has witnessed the remarkable success of deep neural networks in analyzing high dimensional data, especially in computer vision and natural language processing. In recent years, these successes have sparked significant interest in applying deep neural networks to make scientific discoveries from experimental data, including the identification of hidden physical principles and governing equations across a wide range of problem settings \cite{carleo2019machine,iten2020discovering,raissi2018hidden,de2019discovery}.

While discoveries of scientific principles can range from conceptual and qualitative to precise and quantitative, one common desire is to discover interpretatable and quantitative relationships between target and input variables. To achieve such an ambitious goal, one must integrate existing scientific knowledge into neural network design. Despite the efforts aimed at improving the interpretability of deep neural networks, they are still largely viewed as ``black boxes." This ``black box'' nature sometimes is convenient because it allows users to create useful models and obtain results without requiring a complete understanding of its inner workings. However, these models tend to generalize poorly across different application settings, making it difficult for users to re-design the model for new applications of interest.

In real-world applications, data sets are often associated with extra information that is very important to domain experts---for example, underlying constraints between variables, or meta-data about which variables (measurements) are more or less reliable than others. Inferring this extra information \emph{de novo} from the data may be difficult for various reasons, such as computational cost, sampling bias, or limitations of the model itself. For instance, invariance to reflection across a vertical axis is essential knowledge for object classification from images, but this information is difficult to learn from limited training data.

At the same time, building this type of knowledge into the model is expected to increase the model's performance for the specific data set at hand. Currently, this is commonly done by data augmentation or by introducing additional terms in the loss function. In this paper, we emphasize that analytically architecting a neural network
based on underlying data structure and customizing activation functions provide a more efficient approach than data augmentation or  customizing loss functions.

This view is shared by many researchers. For example, \cite{wang2020incorporating} investigated tailoring deep neural network structures to enforce a different symmetry. It is not a surprise that their methods improved the generalization of the model. On the other hand, by purposefully designing neural network architecture and components and testing their effectiveness on data, we may let machine learning models guide us to discover new scientific laws hidden in the data.

In this paper, we first give a linear algebraic explanation of the inner workings of a neural network.
This simple and transparent explanation helps us to get a better understanding of the role of activation functions in neural networks. Then, through the connection with the Diaconis-Shahshahani Approximation Theorem, we discuss the importance and criteria for customizing activation functions. As a case study, we consider one special yet common case when the target function $f$ is partially exchangeable, \emph{i.e.} $f(u,v,w)=f(v,u,w)$ for $u,v\in \mathbb{R}^d$, and $w\in \mathbb{R}^k$. We proved that for such a target function, using an even activation function in at least one of the layers guarantees that the prediction preserves partial exchangeability for best performance. Since even activation functions are seldom used in practice, we then designed a ``seagull" activation function $\log(1+x^2)$ based on our criteria of good activation, and empirically tested over two dozen 9-64 dimensional regression problems, revealing a significant improvement. Improvement was observed regardless of the original activation function, the loss function, the optimizer, potential noise in the data, but it was most pronounced when the activation function substitute was applied to the layer in which the exchangeable variables are connected for the first time. While the improvement is greatest for low-dimensional data, our experiments on CIFAR10 image classification problem showed that even for high-dimensional case with low degree of exchangeability, the improvement can still be noticeable.

\section{A Linear Algebraic Explanation of the Role of Activation}
For a deep neural network with $k$-hidden layers, the approximation can be written as:
\begin{align*}
f(x_1,x_2,\ldots, x_d)
\approx \sum_{i=1}^{m_k} g\left(\phi_{i}^{k}(x_1,x_2,\ldots, x_d)\right)w_{i}^k+b^k,
\end{align*}
where $\phi_{i}^{k}$ are recursively defined by $\phi_{i}^{1}(x_1,x_2,\ldots, x_d)=x_1w_{1i}^{1}+x_2w_{2i}^1+\cdots+x_dw_{di}^1+b_i^{1}$, and
\begin{align*}
\phi_{i}^{r}(x_1,x_2,\ldots, x_d)
=\sum_{j=1}^{m_r}g\left(\phi_j^{r-1}(x_1,x_2,\ldots, x_d)\right)w_{ji}^{r}+b_i^r,\ \ r=2,3,\ldots, k.
\end{align*}

It is well known that neural networks with at least one hidden layer can approximate any multivariate continuous function (\cite{gybenko1989approximation} \cite{hornik1991approximation} \cite{ismailov2014approximation}).
While the proofs are not difficult, they are usually abstract. For example, the proof in \cite{gybenko1989approximation} used Hahn-Banach Theorem. To better understand the inner workings of neural networks, here we use linear algebra to explain the approximation mechanism. For clarity, we only consider neural networks with one hidden layer and assume the error is measured by the mean square error.

For each input data ${\bf x}_j=(x_{j1}, x_{j2}, \ldots, x_{jd})$, the output
$$\sum_{i=1}^m g\left(x_{j1}w_{1i}+x_{j2}w_{2i}+\cdots+x_{jd}w_{di}+b_i\right)\alpha_i+\beta$$
is meant to approximate the target value $y_j$. If we denote
\begin{align}
X = \left(
        \begin{array}{cccc}
          x_{11} & x_{12} & \cdots & x_{1d} \\
          x_{21} & x_{22} & \cdots & x_{2d} \\
          \vdots & \vdots & \cdots & \vdots \\
          x_{N1} & x_{N2} & \cdots & x_{Nd} \\
        \end{array}
      \right),
\ \
W = \left(
        \begin{array}{cccc}
          w_{11} & w_{12} & \cdots & w_{1m} \\
          w_{21} & w_{22} & \cdots & w_{2m} \\
          \vdots & \vdots & \cdots & \vdots \\
          w_{d1} & w_{d2} & \cdots & w_{dm} \\
        \end{array}
      \right),
\end{align}
and denote $y=(y_1, y_2, \ldots, y_N)^T$, $b=(b_1,b_2,\ldots, b_m)$, $\alpha=(\alpha_1,\alpha_2,\ldots, \alpha_m)^T$, and let $\mathds{1}$ be the $N$-dimensional column vector whose entries are all 1, then the problem can be formulated as finding the least square solution to the problem
$$g(XW+\mathds{1}b)\alpha+\mathds{1}\beta= y,$$
where $g(U)$ means to apply the function $g$ to every entry of $U$.

When $g(XW+\mathds{1}b)$ is fixed, there is an analytical solution
\begin{align}
\alpha =& \left(g(XW+\mathds{1}b)-\overline{g(XW+\mathds{1}b)}\right)^+y, \\
 \beta =& \overline{y}-\overline{g(XW+\mathds{1}b)}\alpha,
\end{align}
where $U^+$ means the Moore–Penrose inverse of the rectangular matrix $U$, and $\overline{M}$ means the row mean of the matrix $M$.
Thus, the main goal of the problem is to find $W$ and $b$ so that the affine span of $g(XW+\mathds{1}b)$ can best approximate the $N$-dimensional vector $y$.

Note that the matrix $XW+\mathds{1}b$ has rank at most $d+1$. The role of the activation function is to create a matrix $g(XW+\mathds{1}b)$ with a much higher rank, so that affine combinations of its column vectors can approximate the $N$-dimensional vector $y$. Not every function $g$ has the such a potential. For example, if $g(t)$ is a polynomial of degree $p$, then the matrix $g(XW+\mathds{1}b)$ has rank at most ${d+p}\choose{p}$, regardless of the size of $W$. Fortunately, the following theorem says that polynomials are essentially the only functions that have this limitation.

\begin{theorem}
Suppose $g(t)$ is not a polynomial in some open interval $I\subset \R$, in which there exists one point at which $g$ is infinitely many times differentiable. For any distinct points ${\bf x}_i$, $1\le i\le N$ in $\R^d$, with $N\gg d$, and any $d<m\le N$, there exists a $d\times m$ matrix $W$ and an $m$-dimensional row vector $b$, such that $g(XW+\mathds{1}b)$ has rank $m$, where $X$ is the $N\times d$ matrix whose $i$-th row is the vector ${\bf x}_i$. The same statement holds if $g(t)=\max\{t,0\}$. On the other hand, if $g$ is a polynomial of degree $p$, then the rank of $g(XW+\mathds{1}b)$ is at most ${p+d}\choose{p}$ regardless of the size of $W$.
\end{theorem}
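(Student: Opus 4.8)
The plan is to treat the three assertions separately, disposing of the easy polynomial bound first and then handling the two existence statements through a common reduction to one variable. For the converse, note that each entry of $XW+\mathds{1}b$ is an affine function of the coordinates of the corresponding input $\mathbf{x}_i$, so if $g$ is a polynomial of degree $p$ then every entry of $g(XW+\mathds{1}b)$, and hence every column, is the evaluation at $\mathbf{x}_1,\dots,\mathbf{x}_N$ of a polynomial of degree at most $p$ in $d$ variables. The space of such polynomials has dimension $\binom{p+d}{p}$, so the evaluation map $q\mapsto\bigl(q(\mathbf{x}_1),\dots,q(\mathbf{x}_N)\bigr)$ sends it onto a subspace of $\R^N$ of dimension at most $\binom{p+d}{p}$; every column lies in that subspace, forcing the rank to be at most $\binom{p+d}{p}$ regardless of $W$ and $b$.

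For the first assertion I would prove the stronger statement that the vectors $v(w,b)=\bigl(g(w\cdot\mathbf{x}_i+b)\bigr)_{i=1}^N$, as $(w,b)$ range over $\R^d\times\R$, span all of $\R^N$; granting this, any $m$ linearly independent such vectors can be taken as the columns of $g(XW+\mathds{1}b)$, yielding rank exactly $m$ for each $d<m\le N$. Suppose instead some nonzero $c=(c_1,\dots,c_N)$ annihilates every $v(w,b)$, i.e.\ $\sum_{i=1}^N c_i\,g(w\cdot\mathbf{x}_i+b)=0$ for all $w,b$. Since the $\mathbf{x}_i$ are distinct, the directions $w$ for which $w\cdot\mathbf{x}_i=w\cdot\mathbf{x}_j$ for some $i\neq j$ form finitely many hyperplanes, so a generic $w_0$ makes $s_i:=w_0\cdot\mathbf{x}_i$ pairwise distinct; restricting to $w=\lambda w_0$ gives $G(\lambda,b):=\sum_i c_i\,g(\lambda s_i+b)\equiv0$.

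Near the point $t_0\in I$ at which $g$ is $C^\infty$, I would choose $b$ in the smooth neighborhood and $|\lambda|$ small enough that every argument $\lambda s_i+b$ stays inside it; there $G$ is smooth and identically zero, so $\partial_\lambda^{\,n}G=\sum_i c_i s_i^{\,n}g^{(n)}(\lambda s_i+b)=0$, and letting $\lambda\to0$ gives $\bigl(\sum_i c_i s_i^{\,n}\bigr)g^{(n)}(b)=0$ for every $n$ and every $b$ in the neighborhood. The decisive input is non-polynomiality: no $g^{(n)}$ vanishes identically, so for each $n$ some $b$ has $g^{(n)}(b)\neq0$, forcing $\sum_i c_i s_i^{\,n}=0$ for all $n\ge0$, and the $N\times N$ Vandermonde matrix in the distinct $s_i$ with exponents $0,\dots,N-1$ then gives $c=0$, a contradiction. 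I expect this very use of the hypothesis to be the main obstacle, because the statement only guarantees that $g$ fails to be a polynomial somewhere on $I$, whereas the differentiation needs it to fail on the particular neighborhood where it is smooth. To close the gap I would exploit the functional equation $G\equiv0$ itself: solving for $g(\lambda s_N+b)$ in terms of the remaining shifts propagates the $C^\infty$ property from the one smooth neighborhood along all translates by the differences $s_i-s_j$, which are dense for generic $w_0$, upgrading $g$ to an honestly $C^\infty$ non-polynomial function on $\R$; such a function must admit a point where every derivative is nonzero, and re-centering the argument there supplies the needed equations $\sum_i c_i s_i^{\,n}=0$.

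Finally, the ReLU case $g(t)=\max\{t,0\}$ is deliberately excluded from the smooth argument, since it is piecewise linear and hence locally a polynomial away from its kink, so I would dispatch it by explicit construction. Keeping the same direction $w_0$, order the distinct values $s_1<\dots<s_N$ and insert thresholds $\theta_j\in(s_{j-1},s_j)$ with the convention $s_0=-\infty$; taking columns with $w=w_0$ and $b=-\theta_j$ produces entries $\max\{s_i-\theta_j,0\}$ that vanish for $i<j$ and are strictly positive for $i\ge j$. The resulting array is lower triangular with nonzero diagonal, so selecting the first $m$ thresholds gives an $N\times m$ matrix of rank $m$, completing every case.
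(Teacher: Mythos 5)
Your proposal is correct in substance and reaches the conclusion by a dual version of the paper's own argument. The paper constructs a rank-one weight matrix $W=(w_1,\dots,w_d)^T(t_1,\dots,t_m)$ with every bias equal to a single point $b_0$ at which $g^{(k)}(b_0)\neq 0$ for all $k\le N$ (a fact it asserts without proof), and derives a contradiction by differentiating the last column in the scaling parameter at $0$, which places the Vandermonde vectors $(c_1^k,\dots,c_N^k)^T$ inside an $(m-1)$-dimensional span. You instead show the whole family $v(w,b)$ spans $\R^N$ by testing against an annihilator: differentiating in $\lambda$ and letting $\lambda\to0$ gives $\bigl(\sum_i c_i s_i^{\,n}\bigr)g^{(n)}(b)=0$ for all $b$ in the smooth neighborhood, so you only need each $g^{(n)}$ to be somewhere nonzero rather than all derivatives to be simultaneously nonzero at one common point --- a genuine simplification of the paper's unproved lemma. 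Your ReLU triangularization and the $\binom{p+d}{p}$ dimension count are essentially identical to the paper's. The one place your sketch is loose is the bootstrapping step you flag yourself: the claim that smoothness propagates ``along all translates by the differences $s_i-s_j$, which are dense'' is not right as stated, since there are only finitely many such differences; the propagation actually works through the continuous parameter $\lambda$, by solving the identity for the term with extremal $s_j$ and choosing $\lambda$ so that all remaining arguments land in the known smooth interval, which enlarges that interval by a definite factor at each iteration. The paper sidesteps this issue by simply assuming $g\in C^\infty(I)$ throughout its proof, so your attempt engages more honestly with the stated hypothesis, but that propagation argument would need to be written out carefully for the proof to be complete.
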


\proof
Because the $N$ points are distinct, there is a direction $(w_1,w_2,\ldots, w_d)^T$ so that the scalar projections of the points on this vector have distinct values. That is, each term in the  following sequence
$$c_k:=x_{k1}w_{1}+x_{k2}w_{2}+\cdots+x_{kd}w_{d}, \ k=1,2,\ldots, N$$
has a distinct non-zero value.

Let us first consider the case where $g$ is infinitely many times differentiable on $I$ and is not a polynomial. In this case, there exists $b_0\in I$ such that the derivatives $g^{(k)}(b_0)\ne 0$ for $0\le k\le N$. We show that for any $d<m\le N$, we can choose a $d\times m$ matrix $W=(w_1,w_2,\ldots, w_d)^T(t_1,t_2,\ldots, t_m)$ and $b=(b_0,b_0,\ldots, b_0)$ such that the matrix $g(XW+\mathds{1}b)$ has rank $m$. Suppose this is not true. Let $S$ be the linear span of first $m-1$ columns of $g(XW+\mathds{1}b)$, and consider the matrix $W(t)=(w_1,w_2,\ldots, w_d)^T(t_1,t_2,\ldots, t_{m-1},t)$. Since for all $t$, the last column of the matrix $g(XW(t)+\mathds{1}b)$, that is, the vector $v(t):=(g(c_1t+b_0), g(c_2t+b_0),\ldots, g(c_Nt+b_0))^T$ belongs to the space $S$, which is independent of $t$. So does the derivatives of $v(t)$. Taking derivative of $v(t)$ with respect to $t$ for $k$-times, $k=0,1,...,m-1$ and evaluate it at $t=0$, we obtain $v^{(k)}(0)= g^{(k)}(b_0)(c_1^k,c_2^k,\ldots, c_N^k)^T$. Since $g^{(k)}(b_0)\ne 0$ for all $k=0,1,2,\ldots m-1$,  We have $(c_1^k, c_2^k, \ldots, c_N^k)^T\in S$, $0\le k\le m-1$. This is a contradiction because the $m$ vectors $(c_1^k, c_2^k, \ldots, c_N^k)$, $0\le k\le m-1$ are linearly independent, while the dimension of $S$ is at most $m-1$. Thus, $g(XW+\mathds{1}b)$ has rank $m$ for some $d\times m$ matrix $W$ and some $m$-dimensional row vector $b$.

Now, we consider the case when $g(t)=\max\{t,0\}$. Without loss of generality, we assume $c_1>c_2>\cdots>c_N$. By choosing $W=(w_1,w_2,\ldots, w_d)^T(1,1,\ldots, 1)$ and letting $b=(-s_1,-s_2,\ldots, -s_m)$ where $c_k>s_k\ge c_{k+1}$, $1\le k\le m$, we see that the $k$-th column of $g(XW+\mathds{1}b)$ is the vector $(c_1-s_k,c_2-s_k,\ldots, c_k-s_k, 0,0,\ldots, 0)^T$ with $k$-th coordinate non-zero, all the coordinates after the $k$-th equal 0. It is clear that the linear span of such column vectors has dimension $m$.

In the other direction, suppose $g$ is a polynomial of degree $p$, then all the column vectors of $g(XW+\mathds{1}b)$ can be expressed as a linear combination of entry-wise products of the form $u_{j_1}u_{j_2}\cdots u_{j_k}$ for $1\le k\le p$, where $u_j =(x_{1j}, x_{2j}, \ldots, x_{Nj})^T$. Thus, the rank of $g(XW+\mathds{1}b)$ is at most ${d+p}\choose{p}$.

\begin{remark}
It is known that a continuous non-polynomial activation function ensures uniform approximability on a compact domain (Theorem 3.1 in \cite{Pinkus}). Here we only gave a quick linear algebraic proof under a stronger assumption of infinite differentiability at one point, paving the road to constructing a new activation function in the next section. Also, the result about ReLU function is known (e.g. Theorem 1 in \cite{Zhang}).
\end{remark}

\section{Customizing Activation Functions}
While all functions except polynomials can be a valid choice for an activation function, a good activation may significantly reduce the number of combination terms needed. Thus, customizing activation functions can be very important. The following theorem of Diaconis and Shahshahani \cite{diaconis1984nonlinear} can be viewed as the first promotion of customizing activation functions: Any multivariate continuous function $f$ on a bounded closed region in $\mathbb{R}^d$ can be approximated by:
\begin{align} \label{eq1}
f(x_1,x_2,\ldots, x_d)\approx \sum_{i=1}^m g_i\left(a_{i1}x_1+a_{i2}x_2+\ldots+a_{id}x_d\right),
\end{align}
where $g_i$ are non-linear continuous functions (\cite{diaconis1984nonlinear} Theorem 4). Thus, choosing good activation functions $g_i$ are very important for accurate approximation.

The current off-the-shelf neural networks pay little attention to choosing activation functions.
Instead, they simply use one of a few fixed activation functions such as ReLU (i.e., $\max(x,0)$).
While such a practice seems to work well especially for classification problems, it may not be the most efficient way for regression problems.
This greatly limits the potential of the neural networks.

A good choice of activation functions can greatly increase the efficiency of a neural network.
For example, to approximate the function $z=\sin(xy)$, a neural network may use the following step-by-step strategy (\ref{eq3}) to bring $(x,y)^T$ to $\sin(xy)$:
\begin{equation} \label{eq3}
\begin{split}
\left(
    \begin{array}{c}
      x \\
      y \\
    \end{array}
 \right)
 & \stackrel{\rm{linear}}{\longrightarrow}
	\left(
    \begin{array}{c}
      x+y \\
      x-y \\
    \end{array}
  \right)
  \stackrel{(\cdot)^2}{\longrightarrow} \left(
    \begin{array}{c}
      (x+y)^2 \\
      (x-y)^2 \\
    \end{array}
  \right) \stackrel{\rm{linear}}{\longrightarrow}  xy \quad \stackrel{\sin(\cdot)}{\longrightarrow}  \sin(xy)
\end{split}
\end{equation}
In other words, if one happens to use the non-linear functions $f(t)=t^2$ and $g(s)=\sin(s)$ as the activation function in the first layer and the second layer, respectively, then even the exact function can be discovered. Of course, without knowing the closed form expression of $f$, it is impossible to choose the exact activation functions as exemplified above. However, with some partial information that likely exists from domain knowledge, one may design better activation functions accordingly. This is the goal of the current paper. Let us remark that this analytical-design approach is different from the adaptive activation functions approach used in Japtap et al. \cite{jagtap2019locally}, in which both layer-wise and neuron-wise
locally adaptive activation functions were introduced into Physics-informed
Neural Networks \cite{raissi2019physics}, and showed that such adaptive activation functions lead to improvements on training speed and accuracy on a group of benchmark data sets.

To customize activation functions, the main criteria is their potential to easily generate vectors outside the linear span of the input vectors. We have analyzed above that as long as the activation function is not a polynomial, it can generate vectors whose affine combinations can approximate all target vectors. Other than that, the performances of an activation function often depends on the datasets and the model hyperparameters and network architecture. In image classification, ReLU and Leaky ReLU are popular choices. For recurrent neural networks, sigmoid and hyperbolic tangent are more commonly used. Here at the risk of drawing criticisms, we post three criteria:\\

\noindent {\bf 1. Smoothness}. The smoothness of the activation function is a natural assumption for regression problems in which the target function is smooth. Consider the regression problem of approximating an unknown multivariate continuous function $f(x_1,x_2, \ldots, x_d)$ on a bounded region using a neural network that uses a gradient-based optimizer. For a gradient-based algorithm to work well, the partial derivatives of the function $f$ must exist. Thus, it is necessary that the function $f$ is of bounded Lipschitz, i.e., there exists a constant $L$ such that for all $X = (x_1,x_2,\ldots, x_d)$ and $Y = (y_1,y_2,\ldots, y_d)$ in the bounded region, we have
\begin{equation} \label{eq4}
|f(X)-f(Y)| \le L\sqrt{\Sigma_{i=1}^d(x_i-y_i)^2 }.
\end{equation}
If we allow some errors $\eps$ in Hausdorff distance in approximating the level sets $K$ of the target function, then we are approximating the set $K+B_{d-1}(\eps)$, where $B_{d-1}(\eps)$ is the ball in $\R^{d-1}$ with radius $\eps$. It is known that the surface of $K+B_{d-1}(\eps)$ is of $C^{1,1}$ (but $C^2$) even if $K$ is $c^\infty$ smooth \cite{kiselman}. To approximate such a set, it is best to use shapes of the same degree of smoothness. Thus, the activation function is better to have a continuous first derivative. Extra degree of smoothness is not necessary. In other words, using functions like $\log(1+x^4)$ for extra smoothness near $x=0$ is not necessary.\\

\noindent {\bf 2. Local curvature}. Although from our proof of Theorem 1, we only need one point $x_0$ at which the activation function $g$ satisfies $g^{(k)}(x_0)\ne 0$ for all $k$, to release the burden of choosing the best $W$ and $b$ in $g(XW+\mathds{1}b)$ to match the point $x_0 =XW+\mathds{1}b$, we require that $g^{(k)}(x)\ne 0$ for all $k$ at many places of $x$ as possible. In other words, $g$ is not a piecewise polynomial. At first look, this sounds like contradicting with the common use of cubic splines in approximating smooth functions. The difference is that this activation function is used to transform the intermediate variables, not to locally approximate the function itself.\\

\noindent {\bf 3. Growth rate}. From point of view of reducing generalization error, a good activation function should not put too much weight on a few particular variables. An outlier in $X$, may cause the value $XW+\mathds{1}b$ to increase significantly. If $g'(t)$ is large when $|t|$ is large, then a gradient-based optimizer will likely to make big change on the corresponding parameter in the weight matrix $W$, making the neural network difficult to train, and a small perturbation on a few variables could significantly alter the prediction. Thus, if an activation function is used in several layers, the growth rate at infinite should not be faster than linear. There are several dozens of activation functions that are commonly used. Analyzing these functions, we notice that besides a few bounded functions such as Binary Step, Sigmoid and Hyperbolic Tangent, all the remaining activation functions (such as in ReLU, Leaky ReLU, PReLU, ELU, GELU, SELU, SiLU, Softplus, Mish) grow linearly as $x\to \infty$.

To take all three criteria into consideration, and to fill in the gap between bounded and linear growth, we propose to consider the following activation functions:
$g_1(x)=\log(1+|x|^\alpha)$, $g_2(x)=\sign(x)\log(1+|x|^\alpha)$ and the function $g_3(x)=\log(1+|x|^\alpha)$ if $x>0$; and $g_3(x)=0$ if $x\le 0$. For $1\le \alpha\le 2$, the function $g_1(x)$ behaves like $x^\alpha$ for positive $x$ near $0$, and behaves like $\alpha\log x$ for $|x|$ large. The function $g_2$ and $g_3$ are the modifications. In particular, introduce
the following two functions: $\log(1+x^2)$ and ${\rm sign}(x)\log(1+|x|)$. For easy referencing, we call $\log(1+x^2)$ as the Seagull activation function, as its graph looks like a flying seagull, and call ${\rm sign}(x)\log(1+|x|)$ as Linear Logarithmic Unit (LLU). We will demonstrate the effectiveness of Seagull activation later in the paper. Figure 3 of \cite{Gao} showed another example that activation functions with logarithmic growth performed noticeably better than bounded or linear growth activation functions.

\section{A Case Study: Partially Exchangeable Targets}
In this section, we consider the case where $f$ satisfies the relation
\begin{equation}\label{eq5}
f(u,v,w)=f(v,u,w)
\end{equation}
for $u=(x_1,x_2,\ldots, x_k),v=(x_{k+1},x_{k+2},\ldots, x_{2k})\in [-1,1]^k$ and $w=(x_{2k+1}, x_{2k+2},\ldots, x_d)\in [-1,1]^{d-2k}$. For convenience, we say such functions are partially exchangeable with respect to two subsets of variables.
These functions are very common in practice. For example, if the function $f$ depends on the distance between two points of observation in space, then as a multivariate function of the coordinates of these two points (with or without factors), $f$ is partially exchangeable. As another example, if a label of an image is invariant under left-right flipping, then it is partially exchangeable. Similarly, the rotational invariance of 3D structures \cite{thomas2018tensor} and  view point equivalent of human faces \cite{leibo2017view} could also be described using partially exchangeable feature. Indeed, if $x_1,x_2,\ldots, x_m$ is the usual vectorization of the pixels, denote $u=(x_1,x_2,\ldots, x_k)$, $v=(x_{m},x_{m-1},\ldots, x_{m-k})$, where $k=m/2$ if $m$ is even, and $k=(m-1)/2$ if $m$ is odd. Then the label of the image can be expressed as $f(u,v)$ if $m$ is even, and $f(u,v,x_{k+1})$ if $m$ is odd. In either case, the function is partially exchangeable with respect to $u$ and $v$, i.e. $f(u,v)=f(v,u)$ or $f(u,v,x_{k+1})=f(v,u,x_{k+1})$.

A special case is that $f$ also satisfies
\begin{equation} \label{eq6}
f(u,v,w)=f\left(\frac{u+v}{2}, \frac{u+v}2,w\right).
\end{equation}
The latter case is more restrictive and less interesting, and will be called the trivial case. An example of the trivial case is that a function $f$ depends on the middle point of $u$ and $v$ in space, not the actual location of each individual points.

We first prove the following:
\begin{theorem} \label{th1}
Suppose a multi-layer neural network is trained to predict an unknown bounded Lipschitz function $f(x)$ on a region in $\mathbb{R}^d$ that contains the origin as its interior.
Suppose the target function is partially exchangeable with respect to some two subsets of variables.
Let $\widehat{f}(x)=\psi\circ g(xW)$ be the neural network prediction of $f$, where $W$ is a $d\times m$ is the weight matrix of the first hidden layer with activation function $g$ and no bias.
If $\widehat{f}$ keeps the non-trivial partial exchangeability of $f$ and is non-trivial, then $\psi\circ g(\cdot)$ is an even function, which can be achieved when $g$ is an even function.
\end{theorem}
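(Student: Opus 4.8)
The plan is to read the partial exchangeability of $\widehat f$ as invariance under the linear involution that interchanges the two blocks of variables, and then to strip away the symmetric part of the input so that only the sign-reversal on the antisymmetric part survives. First I would write $W$ in block form conformally with $x=(u,v,w)$, with sub-blocks $W_u,W_v$ (each $k\times m$) and $W_w$, so that $xW = uW_u+vW_v+wW_w$. The hypothesis $\widehat f(u,v,w)=\widehat f(v,u,w)$ then becomes
\[
\psi\bigl(g(uW_u+vW_v+wW_w)\bigr)=\psi\bigl(g(vW_u+uW_v+wW_w)\bigr).
\]
Passing to $s=u+v$, $t=u-v$ and setting $P=W_u+W_v$, $Q=W_u-W_v$, the two pre-activations are $\tfrac12 sP+\tfrac12 tQ+wW_w$ and $\tfrac12 sP-\tfrac12 tQ+wW_w$; thus swapping $u$ and $v$ reverses the sign of the antisymmetric term $\tfrac12 tQ$ while fixing the symmetric term $\tfrac12 sP+wW_w$.

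The key step is to choose inputs that kill the symmetric term. Because the region contains the origin in its interior, the points $(a,-a,0)$ and $(-a,a,0)$ lie in it for $a$ in a neighborhood of $0$; for these $s=0$ and $w=0$, so the two pre-activations collapse to $aQ$ and $-aQ$. Exchangeability then forces
\[
\psi\bigl(g(aQ)\bigr)=\psi\bigl(g(-aQ)\bigr)
\]
for all such $a$, which says exactly that $\psi\circ g$ is even along the antisymmetric feature directions, i.e. on the span of the rows of $Q=W_u-W_v$.

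Next I would use the non-triviality hypotheses to make this conclusion meaningful. The trivial case \eqref{eq6} is precisely $W_u=W_v$, equivalently $Q=0$, in which $xW$ depends only on $u+v$ and $w$ and the identity above is vacuous; requiring $\widehat f$ to be non-trivial rules this out and gives $Q\neq0$, so the antisymmetric subspace is nonzero and the evenness is a genuine constraint. Under the natural non-degeneracy that $Q$ has full row rank, the antisymmetric features already span $\R^m$ and $\psi\circ g$ is even on all of $\R^m$. For the stated converse, if $g$ is even entrywise then $g(-z)=g(z)$ for every $z$, hence $\psi(g(-z))=\psi(g(z))$ and $\psi\circ g$ is even; feeding the antisymmetric combination $u-v$ through such a $g$ then yields a swap-invariant predictor, exactly the mechanism behind the $\sin(xy)$ construction in \eqref{eq3}.

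The main obstacle is the free symmetric component. What exchangeability supplies directly is $\psi(g(\xi+\eta))=\psi(g(\xi-\eta))$ with the symmetric part $\xi$ free and the antisymmetric part $\eta$ in the row span of $Q$; evenness of $\psi\circ g$ simply does not follow in the $\xi$-directions, so the honest conclusion is evenness on the antisymmetric subspace only. The delicate points are therefore the legitimacy of forcing $\xi=0$ inside the domain (secured by the origin being interior), the extension from a neighborhood of $0$ to the whole antisymmetric subspace, and the precise use of non-triviality to guarantee $Q\neq0$ so that the evenness statement is neither vacuous nor, under full rank of $Q$, merely local.
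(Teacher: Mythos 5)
Your proof is correct and follows essentially the same route as the paper's: both restrict to the antisymmetric inputs $(u,-u,0)$ versus $(-u,u,0)$ to kill the symmetric part and deduce $\psi(g(aQ))=\psi(g(-aQ))$ with $Q=W_u-W_v$, which the paper writes row-by-row as $\psi\circ g\bigl(\sum_{i=1}^k(\alpha_i-\beta_i)x_i\bigr)=\psi\circ g\bigl(-\sum_{i=1}^k(\alpha_i-\beta_i)x_i\bigr)$. Your added caveat that evenness is only forced on the row span of $Q$ (unless $Q$ has full row rank) is a fair refinement that the paper's proof glosses over when it asserts evenness on all of $\R^m$.
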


\begin{proof}
Let $\alpha_1, \alpha_2, \ldots, \alpha_k, \beta_1,\beta_2,\ldots, \beta_k, \gamma_1,\gamma_2,\ldots, \gamma_{d-2k}$ be the row vectors of $W$. Denote $u=(x_1,x_2,\ldots, x_k)$, $v=(x_{k+1},x_{k+2},\ldots, x_{2k})$ and $w=(x_{2k+1}, x_{2k+2},\ldots, x_d)$. We have
$$\widehat{f}(u,v,w)=\psi\circ g\left(\sum_{i=1}^k\alpha_ix_i+\sum_{i=1}^k\beta_ix_{i+k}+\sum_{j=1}^{d-2k}\gamma_j x_{2k+j}\right).$$
In particular, from these expressions we have
\begin{align*}
&\widehat{f}(u,-u, 0)=\psi\circ g\left(\sum_{i=1}^k(\alpha_i-\beta_i)x_i\right),\\
&\widehat{f}(-u,u, 0)=\psi\circ g\left(-\sum_{i=1}^k(\alpha_i-\beta_i)x_i\right).
\end{align*}
By the non-trivial partial exchangeability assumption on $\widehat{f}$, we have $\widehat{f}(u,-u, 0)=\widehat{f}(-u,u, 0)$. Thus,
$$\psi\circ g(\sum_{i=1}^k(\alpha_i-\beta_i)x_i)=\psi\circ g(-\sum_{i=1}^k(\alpha_i-\beta_i)x_i),$$
which implies that $\psi\circ g(\cdot)$ is an even function in $\R^m$.
\end{proof}

\begin{remark}
Since the multivariate function $\psi$ depends on later layers and is typically very complicated, an easy and effective way to ensure that $\psi\circ g(\cdot)$ be an even function is to choose an even activation function $g$ in the first layer where the exchangeably variables are connected. The evenness of $\psi\circ g$ can also be achieved by the evenness of $\psi$, which can be attained by using an even activation function in later layers. For a convolutional neural network, to capture the left-right flipping invariance, one can achieve this by using an even activation function in the first fully-connected layer. To capture local exchangeability, one may use an even activation in the middle.
\end{remark}

Using an even activation function is clearly not a common practice. Indeed, almost all the popular activation functions are not even. As such, we will use the Seagull activation function that we have constructed above, i.e., the function $\log(1+x^2)$. Note that Theorem~\ref{th1} does not suggest that one should use an even activation function for all the layers. Also, our experiments do not seem to indicate that one can get further benefit from extensive use of even activation functions. We suggest to use it in the layer where the exchangeable variables are integrated together for the first time.

\section{Empirical Evaluation}
This section presents the usages of the proposed customization of activation function on both synthetic and real-world data. We first consider lower dimensional cases where there is a strong exchangeability in the target function.

\subsection*{Experiment on different target functions and different original activations} Consider function $y=f(u,v,w)$ being the area of the triangle with vertices $u=(x_1,x_2,x_3)$, $v=(x_4,x_5, x_6)$, and $w=(x_7, x_8, x_9)$. Clearly, $f(x_1,x_2,...,x_9)$ satisfies $f(u,v,w)=f(v,u,w)$. In fact, we have the closed formula $f(u,v,w)=\frac12\sqrt{A^2+B^2+C^2}$, where
$$A=(x_4-x_1)(x_8-x_2)-(x_7-x_1)(x_5-x_2),$$
$$B=(x_4-x_1)(x_9-x_3)-(x_7-x_1)(x_6-x_3),$$
$$C=(x_5-x_2)(x_9-x_3)-(x_8-x_2)(x_6-x_3).$$
In short, $f$ is a square root of a positive fourth degree polynomial of 9 variables. To ensure that the improved performance was not due to the specific formula of the target function $f$, we also consider the following four target functions $\log(1+f)$, $e^f/100$, $\sin(f)$ and $Q(f)=\sqrt{\frac{f^2+3}{f+1}}$. Thus, these five functions cover different rates of growth, from logarithmic growth to exponential growth to wavy function.

To discover an approximate formula using a neural network, we randomly sampled 10,000 9-dimensional vectors from $[-2,2]^9$, representing the $(x,y,z)$-coordinates of three points $u, v, w$ in $\R^3$ respectively, and created labels using the formulas above. In the same way, we independently generated 2000 vectors and the corresponding labels to form a test set.

For the function $f$, we then built several fully-connected neural networks and selected the one with the best overall performance. The selected model was a fully-connected neural network (\emph{i.e.} Multilayer Perceptron) with four hidden layers.
The input layer had 9 nodes and the output layer had 1 node. Each hidden layer had 100 nodes. For other four functions, we repeated the same procedure, and found that the same neural network architecture is close to be the best. Thus, we used the same architecture for all five functions. Based on the selected neural network architecture, for each target function, we built five models using five popular activation functions (ReLU, ELU, sigmoid, tanh, softplus), and trained all the 25 models with the RMSProp optimizer for 500 epochs with batch size was set as 100. The learning rate are tuned for each model function, starting from $(0.001, 0.003, 0.005)$ which was halved every 100 epochs. The best one was used.

To demonstrate the effectiveness of using the customized activation function, for each of the 25 selected models we replaced the activation function in the first hidden layer by the Seagull activation function $\log(1+x^2)$, while leaving all the remaining parts of the neural networks and hyperparameters unchanged.
\begin{figure}[h]
\centering
    \includegraphics[width=3in]{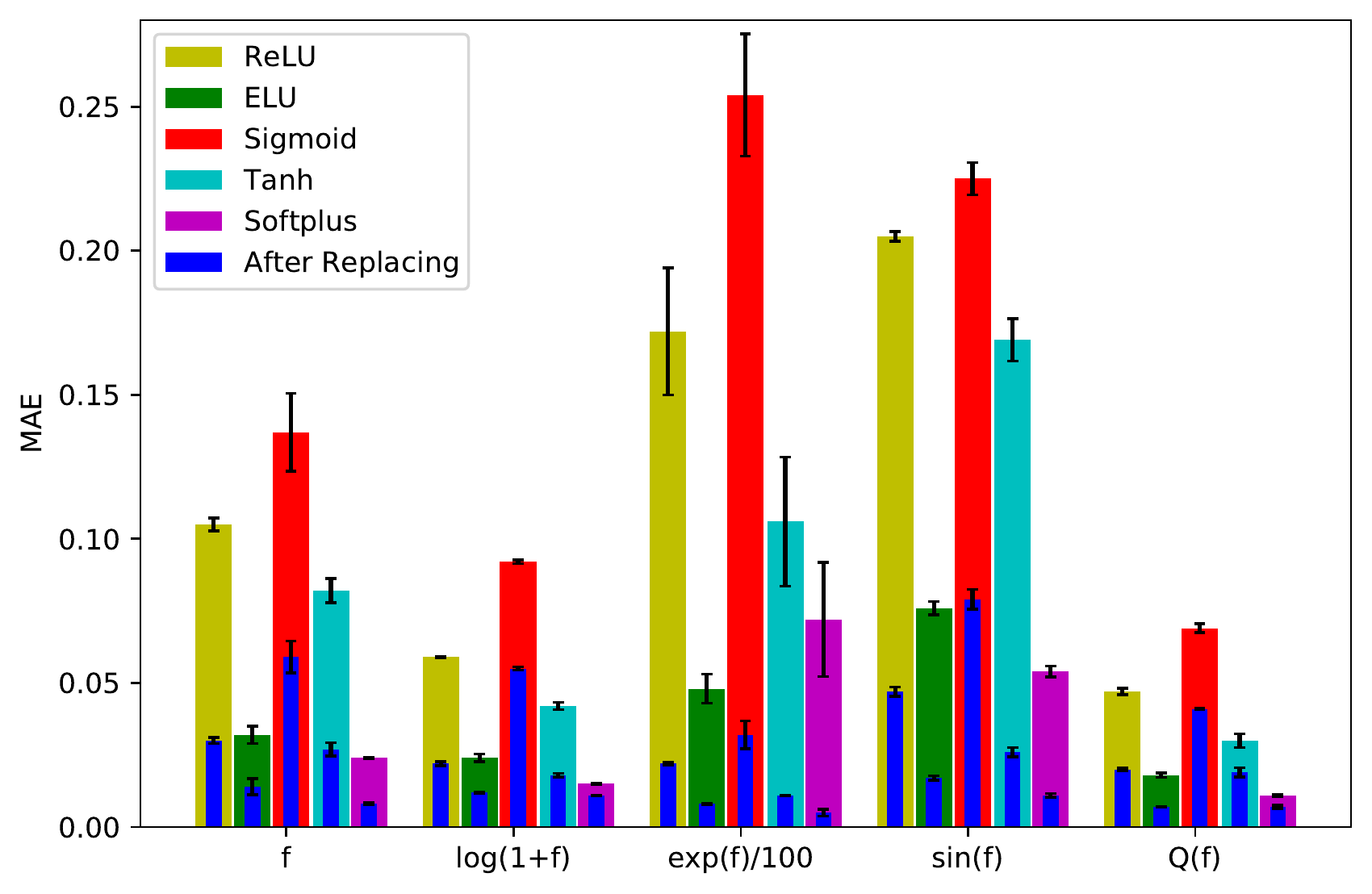}
\includegraphics[width=3in]{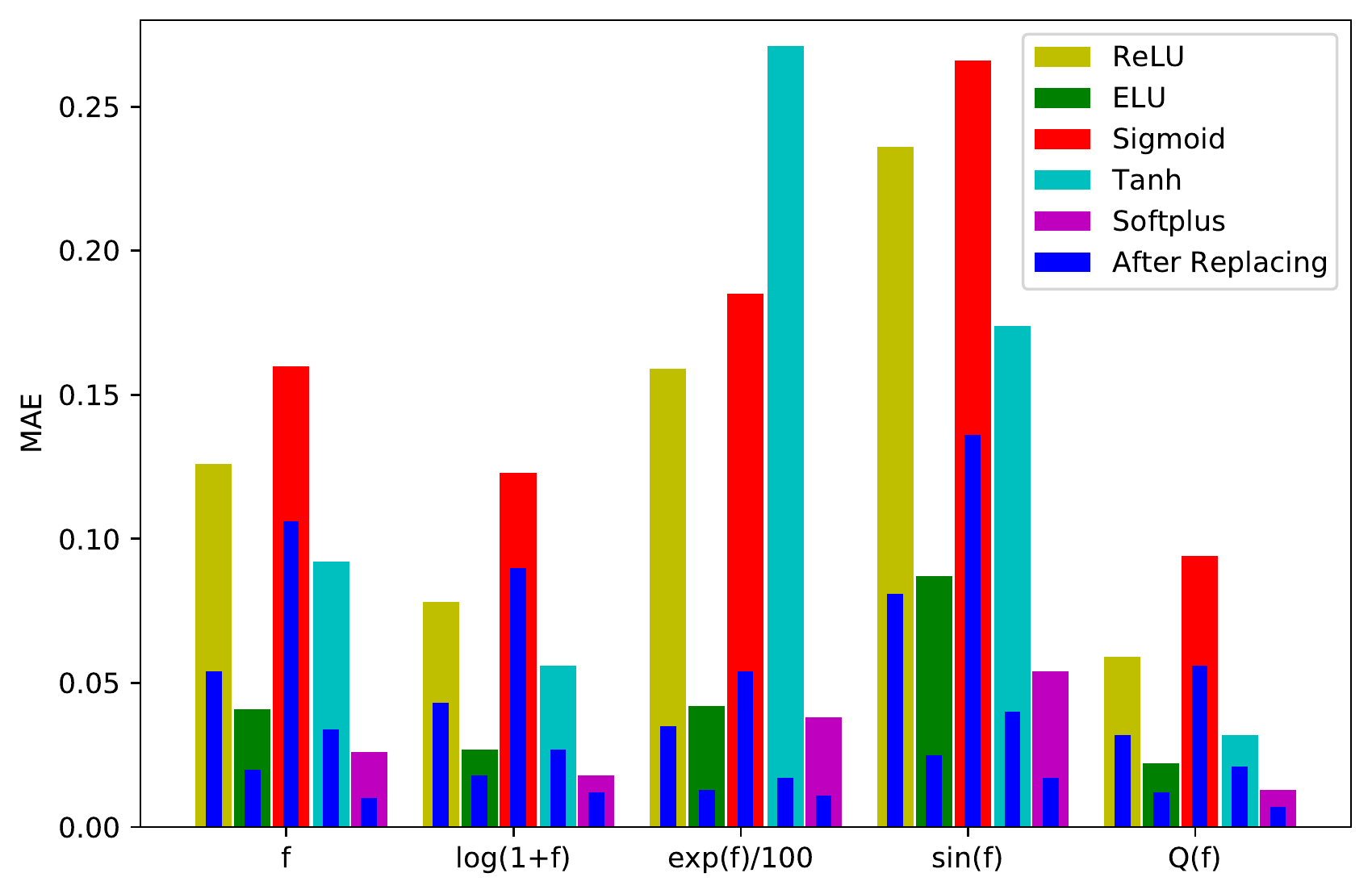}
    \caption{Effect of using Seagull activation functions on five regression tasks. Wider bars: MAE for the original model; Narrow blue bars: MAE of the model when the first activation function is replaced by Seagull. Left: Each experiment was performed 5 times independently. Right: with 5\% noise added in.\label{tab:example1}}
\end{figure}

We evaluated the performance of the network by Mean Absolute Error (MAE), calculated as follows:
\begin{equation}
MAE = \frac{1}{N}\sum_{j=1}^{N}|y_j-\hat{y_j}|
\end{equation}
where $N$ is the number of testing samples, $y_j$ and $\hat{y_j}$ are the ground truth and the prediction, respectively. The reason of first choosing MAE instead of MSE is that MAE is less related to the square function appeared in the new activation function in $\log(1+x^2)$.

The experiment was performed 5 times independently with different number random seeds. The results are reported in Figure 1-left. The results for different target functions are consistent.

In real-world applications, data always contains noise. To test the effectiveness of our method on noisy data, we added random noise of mean $0$ and $5\%$ standard deviation onto the training labels. The results in Figure 1-right demonstrate the robustness and effectiveness of the even activation function.

The improvement can also be visualized by the learning cure. To demonstrate this, we consider the $3\times3$ determinant formed by 9 variables $x_1,x_2,\ldots, x_9$. If we let $f_1$ be the cosine of the determinant. The function $f_1$ is partially exchangeable. To test the effectiveness of substituting with Seagull, we generated 50000 training vectors with each coordinate randomly generated from centered Gaussian distribution with variance 1, and in the same way independently generated 2000 test vectors. After experimenting and tuning the parameters, we built a fully-connected 8-layer neural network with 100 neurons in all middle layers, interlaced by two Batch Normalization layers and five Dropout layers of dropout rate 0.5, with Sigmoid as activation function. We training the model for 500 epochs and batch size 100 using RMSprop with learning rate 0.003 that is halved every 100 epochs to minimize the MSE. Then, for each model, we replaced the Sigmoid by Seagull in the first layer, leaving all the hyperparameters unchanged. Retraining the models from the beginning. The result is reflected in Figure 2.

\begin{figure}
\centering
\includegraphics[width=4in]{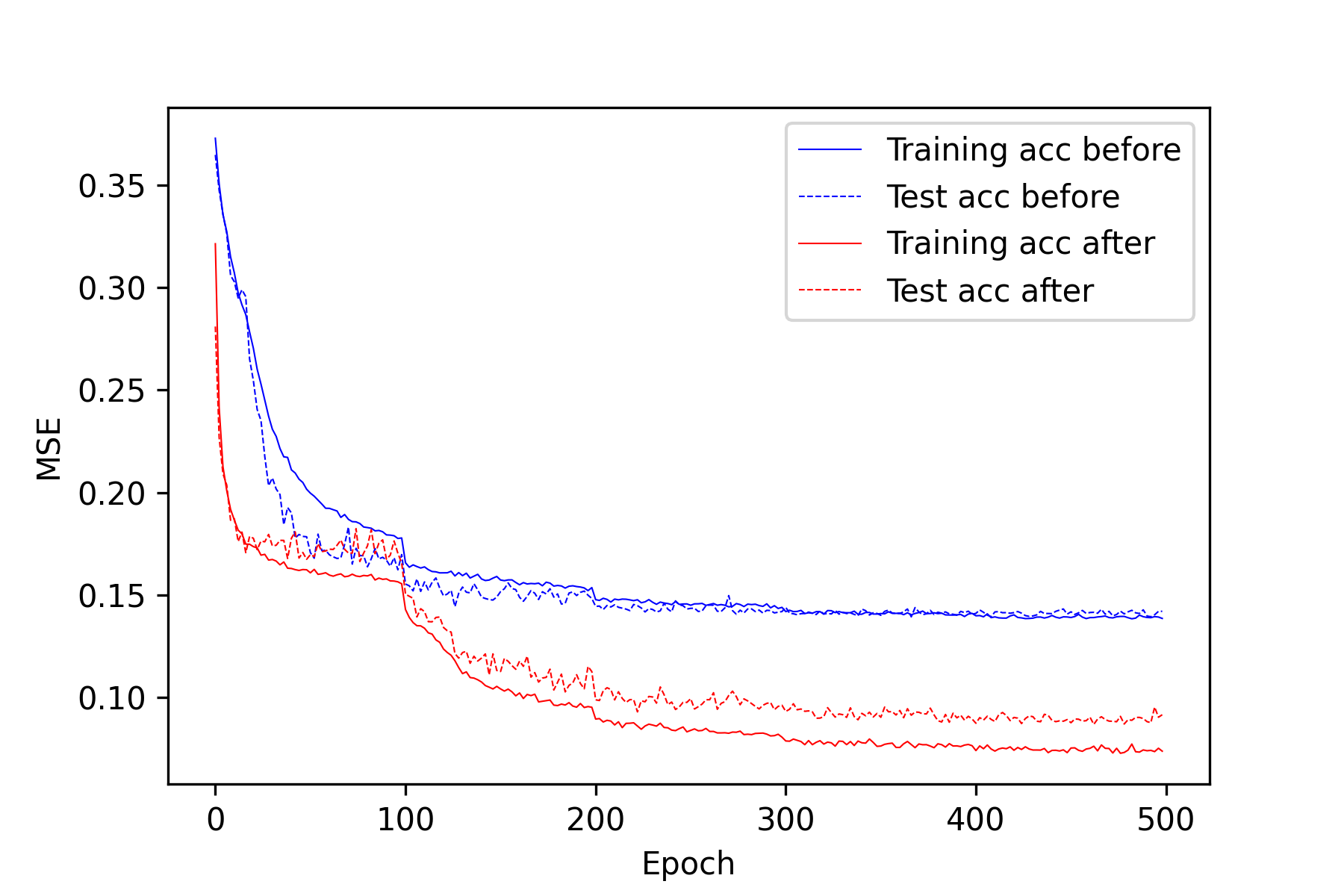}
\caption{Comparison of the learning curves.}
\end{figure}

\subsection*{Experiment on varying degree of exchangeability}
The function $f$ used in the previous two experiments are of strong exchangeability. Indeed, the function is obviously invariant when the three groups of variables $(x_1,x_2,x_3)$, $(x_4,x_5,x_6)$ and $(x_7,x_8,x_9)$ are permuted. Less obvious is that the function is also invariant when the three groups of variables $(x_1,x_4,x_7)$, $(x_2,x_5,x_8)$ and $(x_3,x_6, x_9)$ are permuted. There are total 12 different permutations under which the function $f$ is invariant.

To see the effect of even Seagull on target functions with varying degree of exchangeability, we also consider the following target function. To minimize the artifacts caused by dimensions, we also consider a 9-dimensional function.
Let $\phi$ the solid angle formed by the three vectors $u=(x_1,x_2,x_3)$, $v=(x_4,x_5, x_6)$, and $w=(x_7, x_8, x_9)$ on the unit sphere. The target function $\phi$ has a closed formula
\begin{align*}
\phi(x_1,x_2,x_3,x_4,x_5,x_6,x_7,x_8,x_9)=\phi(u,v,w) = \left\{\begin{array}{ll}2\tan^{-1} z& z\ge 0\\\pi+2\tan^{-1} z& z<0\end{array}\right.
\end{align*}
where $$z=\frac{|(u\times v)\cdot w|}{1+u\cdot v+v\cdot w+w\cdot u}.$$
Note that $\phi$ is intrinsically different from $f$, and the inner function $z$ even has singularity on the surface $u\cdot v+v\cdot w+w\cdot u =1$.
The function $\phi$ are invariant when the three groups of variables $(x_1,x_2,x_3)$, $(x_4,x_5,x_6)$ and $(x_7,x_8,x_9)$ are permuted, but not invariant when the three groups of variables $(x_1,x_4,x_7)$, $(x_2,x_5,x_8)$ and $(x_3,x_6, x_9)$ are permuted. There are 6 different permutations under which $\phi$ is invariant.

\begin{figure}[h]
\centering
    \includegraphics[width=6in]{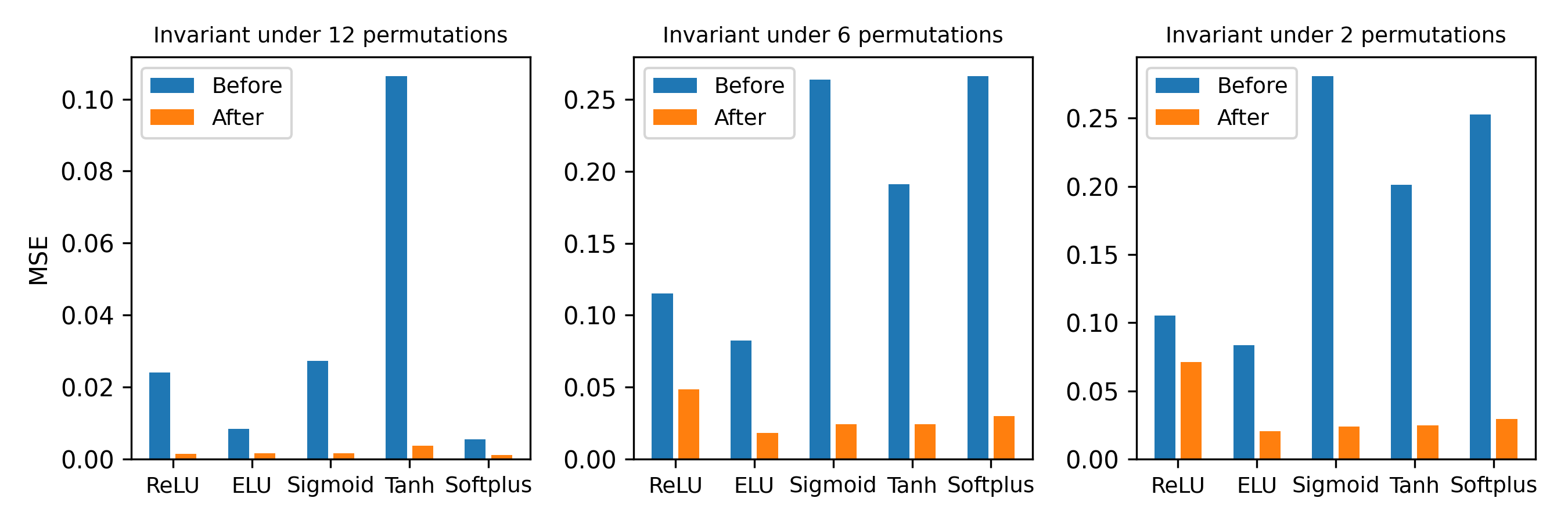}
    \caption{Effect of using Seagull activation functions on some 9 dimensional target functions with varying degree of exchangeability.}
\end{figure}
Furthermore, if we define $\psi(x_1,x_2,\ldots, x_9)=\phi(x_1,x_2,\ldots, x_8, -x_9)$. Then $\psi$ is invariant only when $(x_1,x_2,x_3)$ and $(x_4,x_5,x_6)$ are exchanged. Thus, $\psi$  has the weakest exchangeability (2 permutations),  while $f$ has the strongest exchangeability (12 permutations) among the three functions $f$, $\phi$ and $\psi$.

We used the same approach to test the effectiveness of substituting the original activation in the first layer by Seagull. For each function, we built five models. The number of layers and the number of neurons in each layer are the same, but the activation functions are different (ReLU, ELU, Sigmoid, Tanh, Softplus). We used MSE as loss function and experimented with three different optimizers (SGD, Adam and RMSprop), each with three choices of learning rate. The results in Figure 3 shows that while improvement seems to be positively correlated to the degree of exchangeability, for low-dimensional case, the improvement can be significant when there is only weak exchangeability.

Without the exchangeability, a substitution with Seagull is not expected to have any benefit and could even get worse. To verify the statement, we used model and function associated with Figure 2, but replacing the cosine function with the sine function. The new function is no longer partially exchangeable, and the substitution did not lead to any noticeable improved. Indeed, in for some activation functions, we even got slightly worse results, partly because the hyperparameters were not turned after the substitution.

\subsection*{Experiment on applying Seagull on different layers}
The layer on which the Seagull function is applied matters. The most effect way is to put it in the layer where the exchangeable variables are connected for the first time. Figure 3-left shows that the effectiveness quickly decreases. If the activation function is put at the last layer, the effectiveness could vanish. In one case, it even get worse performance. We believe this is partly due to the fact that the parameters were tuned for the original activation function, but not adjusted after the substitution.

\begin{figure}[h]
\centering
    \includegraphics[width=3in]{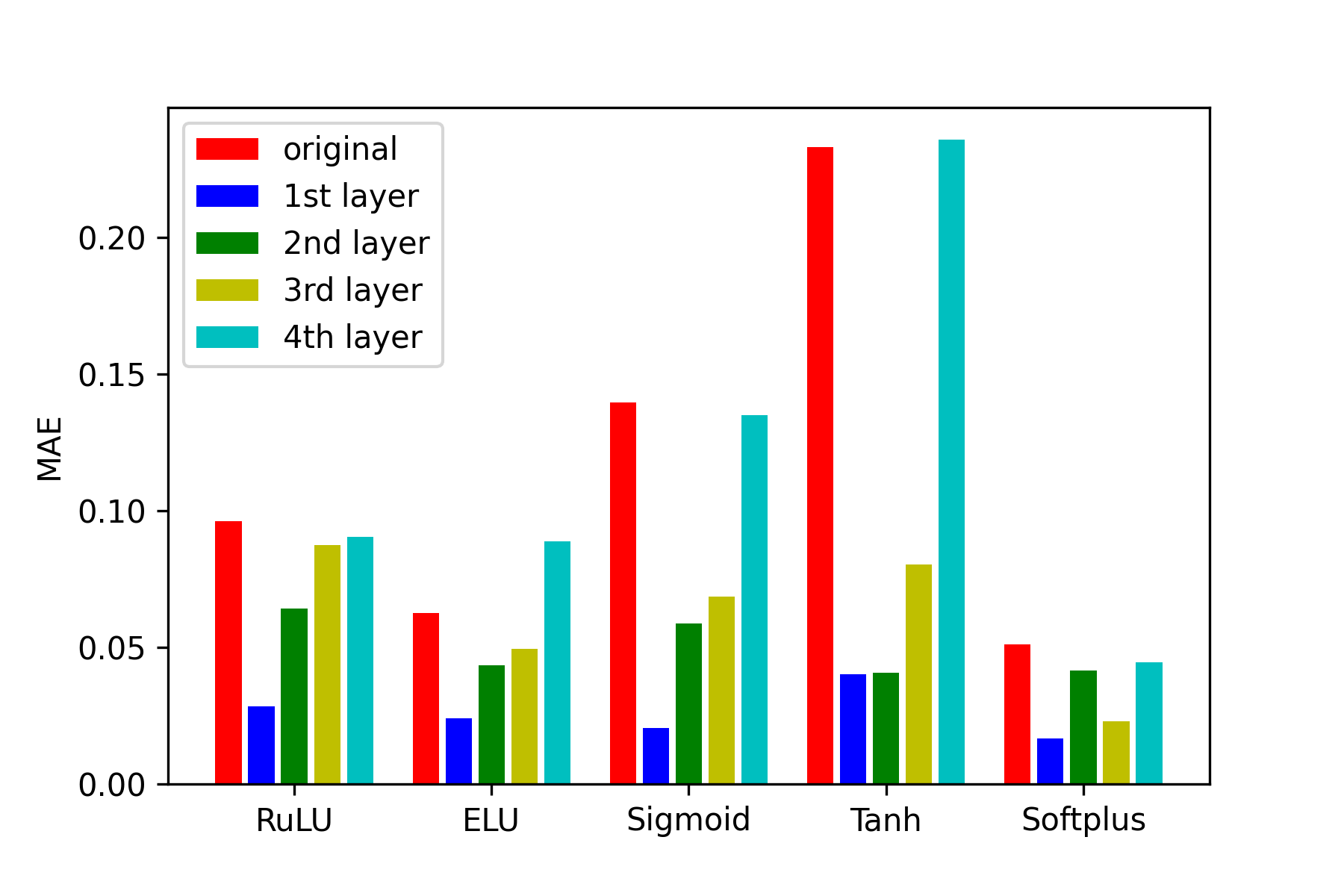} 
\includegraphics[width=3in]{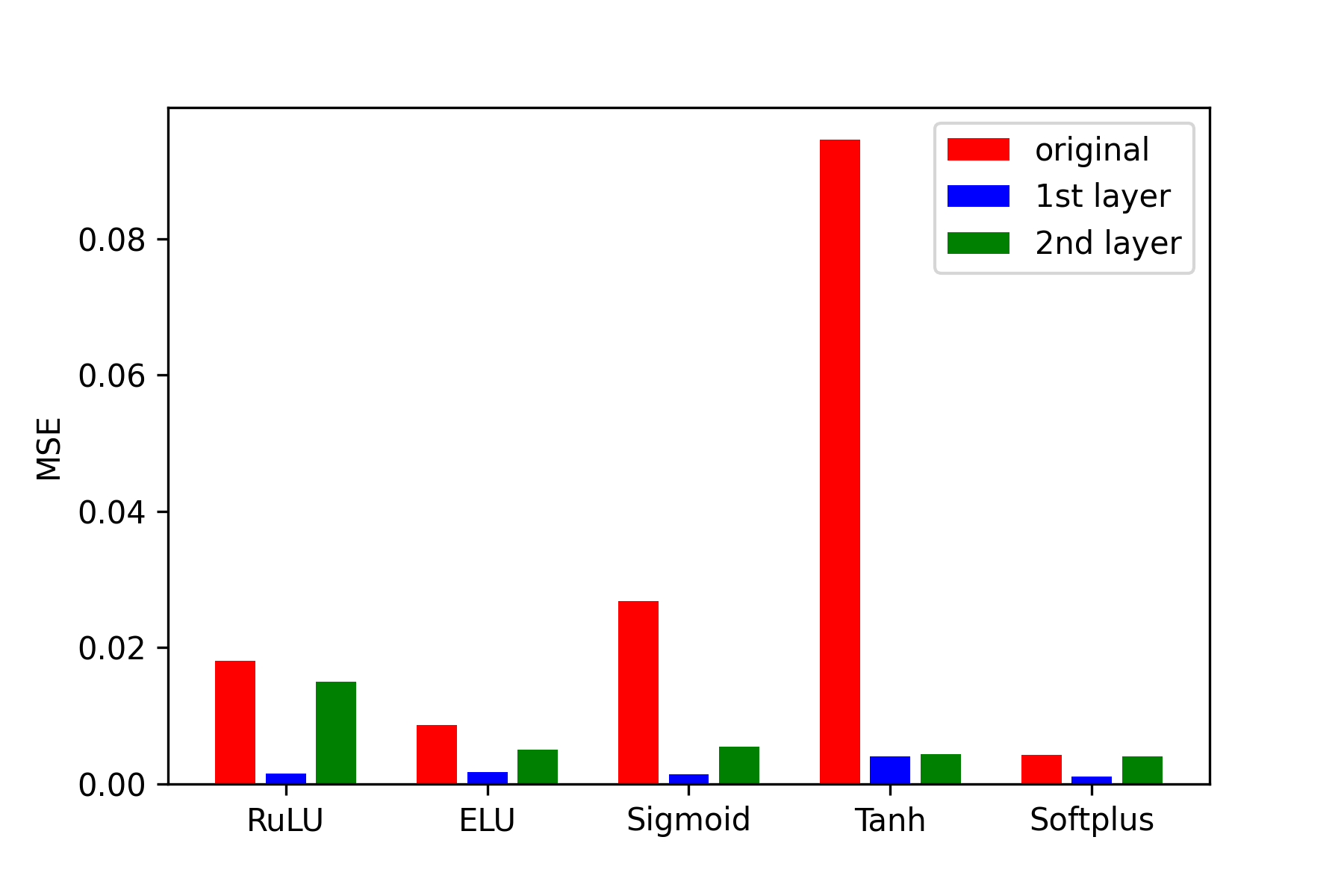} 
    \caption{Effect of using Seagull activation functions on different layers. Left: MAE loss. Right: MSE loss.}
\end{figure}

The improvement is not an artifact of the loss function MAE. Figure 3-right shows that replacing MAE loss with the Mean Square Error (MSE), the improvement is also significant.

While it is possible that a more carefully designed neural network with longer training and finer tuning of the learning rates may produce predictions with smaller errors, the effect of using Seagull activation function in the first layer is evident from all examples.

\subsection*{Experiment on dimensions}

The effect of partial exchangeability will decrease as the dimension of the data increases. Indeed, in high-dimensional case, there are many combinations of the variables, and the ones satisfying partial exchangeability condition are only a tiny fraction. Thus, it is expected that the improvement of using a Seagull activation will decrease as the dimension increases.
The results in the previous experiments are for relatively low dimensions (9 dimensions). To see the dependence on the dimension, we experimented with 16, 25, 36, and 64 dimensional target functions. The improvement quickly decreases as dimension increases.  For example, for 25-dimensional functions, we experimented on the volume of 5-dimensional simplex generated by the origin and five 5-dimensional points whose coordinated are generated by normally distributed with mean 0 and variance 1. The function is invariant under 10 different permutations. We used the same experimental strategy as in the previous examples, but the improvement reduced to 12.7\% on average. When we increase the dimension to 64, we observed no improvement without tuning the parameters after substituting the activation function.

\begin{figure}[h]
\centering
    \includegraphics[width=4in]{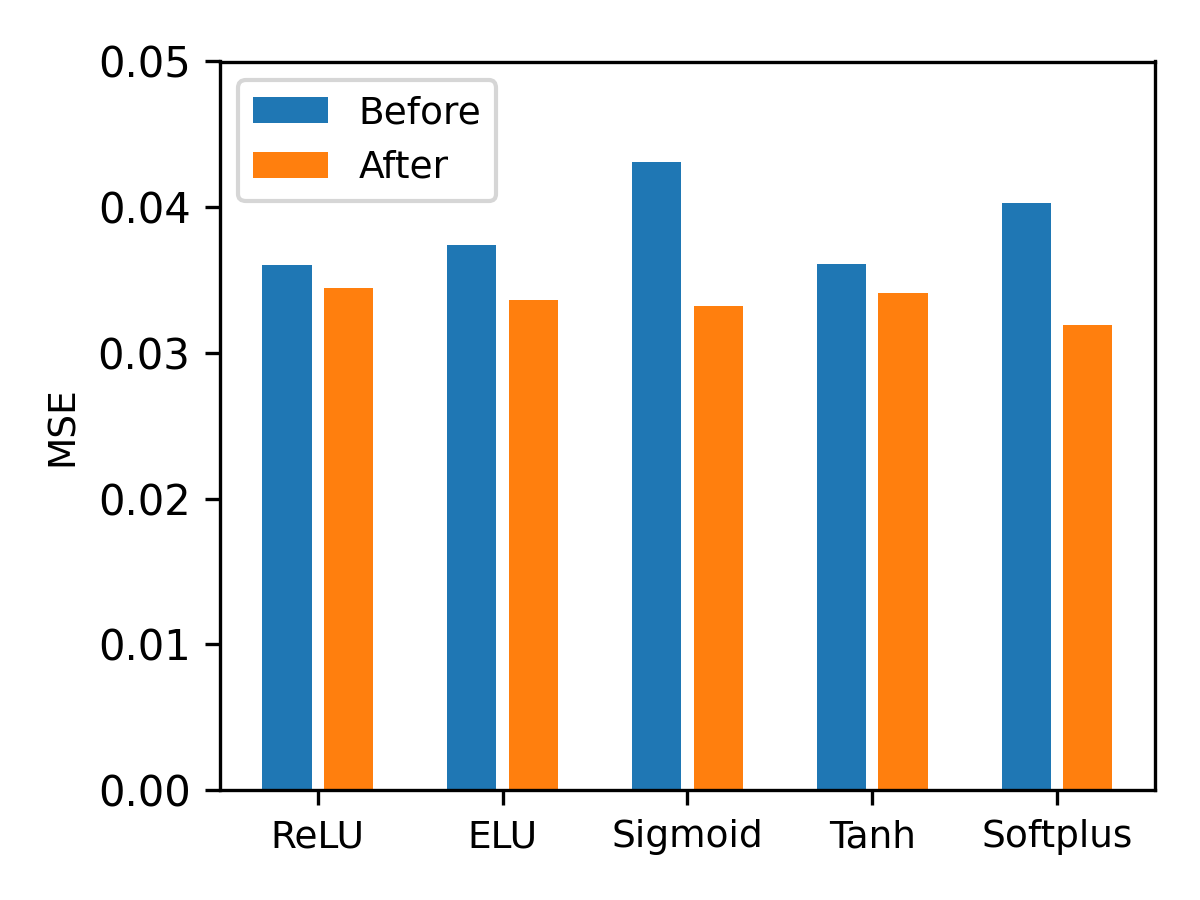}
    \caption{Effect of using Seagull activation functions on a 25-dimensional target function.}
\end{figure}

To see if improvement can still be made for very high-dimensional partially exchangeable target functions when parameters are tuned after the substitution, we performed experiments on CIFAR 10, a popular data set that consists of $60000$ $32\times32$ color images in $10$ classes, with $6000$ images per class. More details about the CIFAR-10 data set could be found in \cite{krizhevsky2009learning}.)

The dimension of the data is $3072$. There is partial exchangeability due to the left-right flipping invariance. One may also argue that by shuffling RGB channels, the classification is likely to be invariant. Although, the partial exchangeability is very limited in such a high-dimensional data, we can still see the effectiveness of using a Seagull activation function.

The following DCNNs were tested: Regnet \cite{radosavovic2020designing}, Resnet \cite{he2016deep}, VGG \cite{simonyan2014very}, and DPN \cite{chen2017dual}.
The details of each of these networks vary. In particular, we employ Regnet with 200 million flops (Regnet200MF), 50-layer Resnet (Resnet50), VGG net with 13 convolutional layers and 3 fully connected layers (VGG16), and DPN-26.

There are different ways to introduce an even activation function into the DCNNs.
Specifically, we replaced the ReLU at the output of each block for Regnet200MF, Resnet50, and VGG16.
Taking Resnet50 as an example, each block in the Resnet50 consists of a few weighted layers and a skip connection.
We replaced the last ReLU, which directly affects the output of the block, with Seagull activation function $\log(1+x^2)$ and left the rest of the activation functions unchanged.

For training setup, 50,000 images were used as training samples and 10,000 images for testing.
The total epoch number was limited as 350.
We selected the SGD optimizer with 0.9 momentum and set the learning rate as 0.1 for epoch $[0, 150)$, $0.01$ for epoch $[150, 250)$, and $0.001$ for epoch $[250, 350)$.
All training was started from scratch.
Considering the random initialization of the DCNNs, each training process was performed 5 times, and the average accuracy on testing data and standard deviation were summarized in Figure \ref{fig:SeagullvsReLU}.

\begin{figure}[h]
\centering
\includegraphics[width=3.8in]{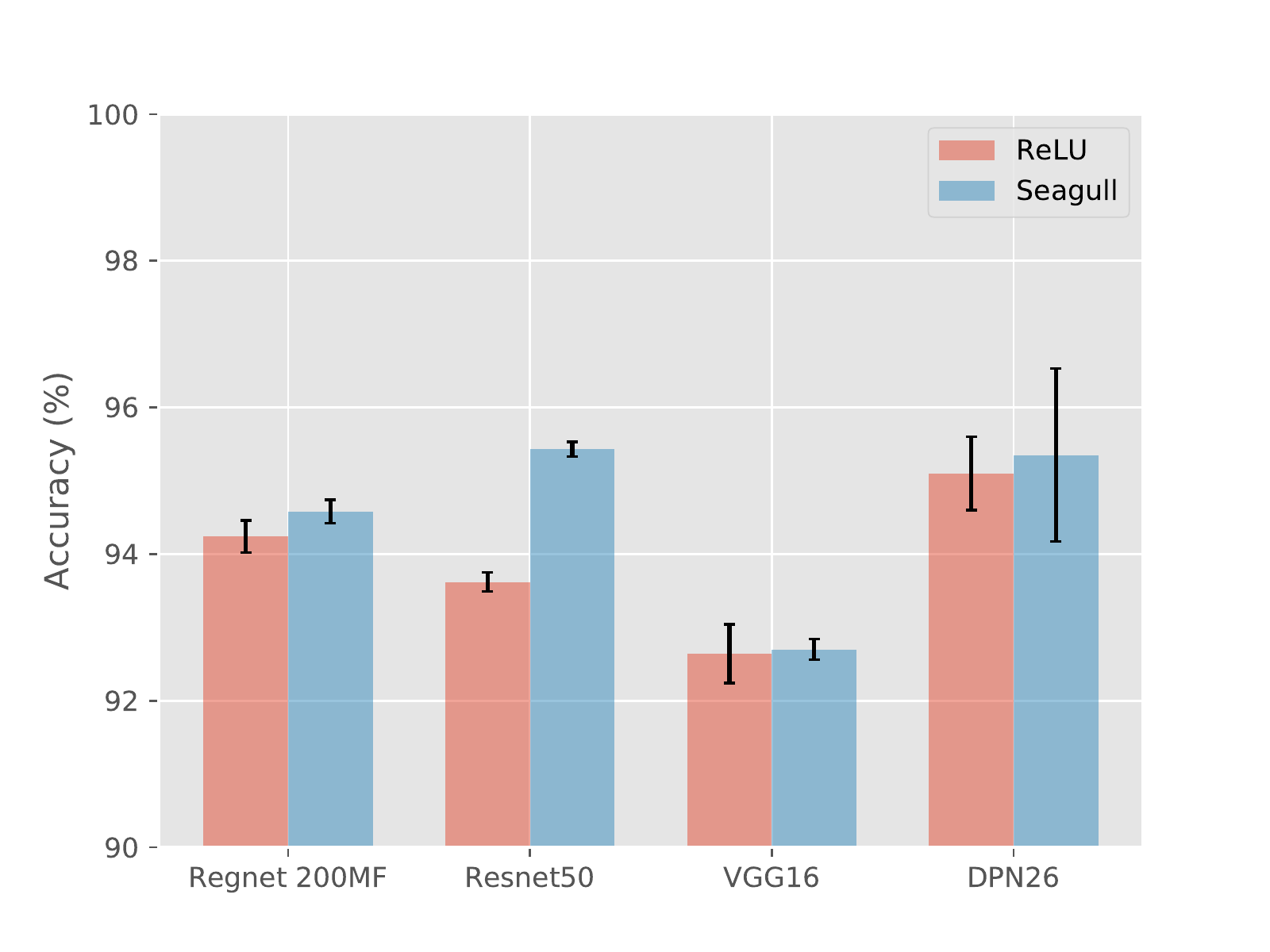} 
\caption{Performance of Seagull activation function and ReLU on CIFAR-10 under different neural network structures. The red bars reflect the average accuracy of ReLU and the blue bars for the Even activation function on CIFAR-10 data set, respectively. The black lines on the top of the bars present the standard deviation of the accuracy.}
\label{fig:SeagullvsReLU}
\end{figure}

The results in Figure \ref{fig:SeagullvsReLU} reveal some interesting information.
The Seagull function presents significant improvement compared to the ReLU function on all three DCNNs, despite some variation.
The feature maps in the network retain the partial exchangeability of the images.
Considering the DCNN structures, the convolution operation and activation functions leave the spatial relationship unchanged as well as the partial exchangeability.
According to Theorem \ref{th1}, the even activation function could capture this feature better compared to ReLU.
While one might argue the rotation and flip invariance could be achieved by data augmentation and filter rotation \cite{gao2017efficient}, using the proposed Seagull function would provide the same feature, as well as significant reduction of the network size and training time.

For DPN-26, a different strategy was selected.
First, we train 300 epochs on the original DPN-26 from scratch. Then we insert a fully-connected layer with an even activation function and trained the neural network with the same hyper parameters for 300 epochs.
The result shows noticeable improvements: For the network without using even activation functions, the average accuracy reached $95.10\%\pm 0.11\%$, whereas the network using an even activation function reached $95.35\% \pm 0.04\%$.
The $95.35\%$ accuracy also outperformed DPN-92, which consists of many more parameters.
This substantial improvement further demonstrates the effectiveness of using even activation functions for partially exchangeable target functions.

\section{Conclusion}
In this paper, we emphasized the importance of studying activation functions in neural networks.
We theoretically proved and experimentally validated on synthetic and real-world data sets that when the target function is partially exchangeable, using Seagull activation function in the layer when the exchangeably variables are connected for the first time can improve neural network performance.
Through a special yet commonly encountered case, these results demonstrate the great potential of customizing activation functions.

\paragraph{Acknowledgement} We would like to acknowledge the support
from the National Science Foundation grants OCA-1940270 and 2019609, 
and the National Institutes of Health grant P20GM104420.

\vskip 0.2in
\bibliographystyle{spmpsci}
\bibliography{Activation_reference}

\end{document}